\documentclass[letter, 10pt, conference]{ieeeconf}  

\usepackage{graphicx}
\usepackage{epstopdf}

\usepackage{cite}

\usepackage{amsmath,amssymb,amsthm}
\usepackage{algorithm,algpseudocode}

\usepackage{color}

\usepackage{subfigure}

\usepackage{tikz}
\usetikzlibrary{shapes,arrows,automata,calc,trees,positioning,fit,shapes,calc,patterns,3d}
\usepackage{pgfplots}
\usepgfplotslibrary{fillbetween,groupplots}


\IEEEoverridecommandlockouts
\overrideIEEEmargins


\theoremstyle{plain}
\newtheorem{theorem}{Theorem}

\newtheorem{proposition}[theorem]{Proposition}

\theoremstyle{definition}

\theoremstyle{remark}


\DeclareMathOperator{\tr}{tr}

\DeclareMathOperator*{\argmin}{arg\,min}


\author{Miguel Calvo-Fullana and Jonathan P. How
\thanks{This work was supported in part by Lockheed Martin Corporation and by ONR under BRC award N000141712072. Authors are at the Massachusetts Institute of Technology, Cambridge, MA, USA. e-mail: \{cfullana,jhow\}@mit.edu.
}
}

\title{\LARGE \bf
Distributed Filtering with Value of Information Censoring 
}

\begin{document}

\maketitle

\begin{abstract}
This work presents a distributed estimation algorithm that efficiently uses the available communication resources. The approach is based on Bayesian filtering that is distributed across a network by using the logarithmic opinion pool operator. Communication efficiency is achieved by having only agents with high Value of Information (VoI) share their estimates, and the algorithm provides a tunable trade-off between communication resources and estimation error. Under linear-Gaussian models the algorithm takes the form of a censored distributed Information filter, which guarantees the consistency of agent estimates. Importantly, consistent estimates are shown to play a crucial role in enabling the large reductions in communication usage provided by the VoI censoring approach. We verify the performance of the proposed method via complex simulations in a dynamic network topology and by experimental validation over a real ad-hoc wireless communication network. The results show the validity of using the proposed method to drastically reduce the communication costs of distributed estimation tasks. 
\end{abstract}

\section{Introduction}

Modern multiagent robotic systems are often equipped with a diverse range of sensing capabilities that can be deployed to perform a range of tasks, such as monitoring, patrolling, and tracking. These tasks are typically conducted in complex and dynamic environments and achieving accurate estimates is crucial to task accomplishment. Due to the heterogeneity of the team and the nature of the environment, the agents must collaborate to obtain these accurate estimates, which puts an emphasis on intelligent communication decisions, especially for systems with networks that have limited capability, reliability and a time-varying topology.

The study of estimation techniques, distributed or not, has a long-standing tradition, with many algorithms being celebrated. Among those, the Kalman filter \cite{kalman1960new} is universally found in practical systems. Due to its ubiquity, several approaches to obtain distribute variants of the Kalman filter exist \cite{olfati2009kalman,olfati2007distributed,kamgarpour2008convergence,das2016consensus}, with the consensus Kalman filter \cite{olfati2009kalman} being its most famous exponent. Nonetheless, some issues plague these methods. For general networks, consensus among the agents is only guaranteed to the mean value of the process; the covariance (belief) among agents possibly diverging \cite{kamgarpour2008convergence}. This issue is sometimes addressed in practice by covariance intersection methods \cite{julier1997non,chen2002estimation}. When dealing with non-Gaussian and nonlinear processes, the Bayes filter is known to be the optimal recursive estimation procedure \cite{chen2003bayesian}. However, guaranteeing the convergence of distributed versions of the Bayes filter over general networks poses severe challenges. In this regard, recent approaches \cite{bandyopadhyay2014distributed,bandyopadhyay2018distributed} have proposed the use of logarithmic opinion pooling mechanisms to overcome this issue.

Another further concern with both distributed Kalman and Bayes filters is their use of communication resources. Some methods \cite{olfati2009kalman,bandyopadhyay2014distributed}, require agents to exchange information multiple times for each filtering iteration, always maintaining consensus between observations. Other methods drop this requirement, allowing for a single round of communication per each observation \cite{kamal2013information,das2016consensus}. Still, the desire to further reduce the communication use has led to event-triggered approaches to distributed estimation \cite{nowzari2019event}. Attention has centered on variations of the Kalman consensus filter \cite{olfati2009kalman}, such as the diffusion Kalman filter \cite{cattivelli2010diffusion}, which can support censoring mechanisms. Triggering conditions provided by a threshold on the innovations \cite{ouimet2018cooperative} or the estimation error \cite{liu2015event,meng2014optimality} being studied. However, these methods suffer from the same estimate consistency issues of previous Kalman consensus algorithms \cite{kamgarpour2008convergence}.

The approach of this work was motivated by recent advances in distributed Bayesian filtering \cite{bandyopadhyay2018distributed}, which exploit logarithmic opinion pools (LogOP) to produce efficient distributed algorithms. We augment these approaches by introducing a Value of Information (VoI) \cite{mu2014efficient} censoring procedure. This mechanism measures the difference between a node's local estimate and its neighborhood average, transmitting when the KL-divergence between these estimates exceeds a certain threshold. Furthermore, in order to obtain a simple and computationally practical algorithm, we specialize the Bayesian VoI filter to linear-Gaussian dynamics, obtaining a censored and distributed variant of the Kalman information filter. In contrast to approaches based on the consensus Kalman algorithm \cite{meng2014optimality,liu2015event,ouimet2018cooperative}, the proposed method provides estimates that are guaranteed to be consistent, preventing the divergence of estimate belief across agents \cite{kamgarpour2008convergence}. Importantly, we show that consistent estimates are necessary in order to produce efficient censoring decisions. To validate the methodology, we conduct simulations of the proposed filtering approach in a target tracking problem over a complex time-varying communication network. We further validate our approach by performing an experimental study over real ad-hoc wireless communications. Results from both simulations and experiments show the validity of using the proposed method to do distributed estimation while only using a fraction of the communication resources.

\section{Problem Formulation}

In this work, we study the problem of distributed estimation. To this end, consider a set of $N$ agents acquiring observations from a random process. Each of the agents has access to its own probability density function (pdf) $p_i$ representing the belief of the $i$-th agent in its estimate of the underlying process. The objective of the agents is to collectively reach a consensus of the following form
\begin{align}\label{eq:minKL}
p^\star \triangleq \argmin_p & \sum_{i=1}^{N} D_{\text{KL}} \bigl( p \| p^i\bigr),
\end{align}
where $D_{\text{KL}}(p \| q)= \int p(x) \log \left(\frac{p(x)}{q(x)}\right) dx$ is the Kullback-Leibler (KL) divergence. More specifically, the objective is to design an iterative consensus algorithm such that $\lim_{k \to \infty} p^i_k = p^\star$ for all agents, where $p^i_k$ denotes the pdf of the $i$-th node at the $k$-th time instance. We consider distributions $p(x)$ generated by a dynamical system. To this end, let $x_k$ correspond to the state of a process at time $k$ evolving according to the following dynamics
\begin{align}\label{eq:system_dynamics}
x_{k+1}=f_k \left(x_k, w_k \right),
\end{align}
where $w_k$ is an independent and identically distributed (i.i.d.) process noise and $f_k$ is a possibly nonlinear and time-varying function governing the state transitions of the system. Each of the $N$ agents may acquire measurements from the process, where the observation of the $i$-th sensor at time $k$ is 
\begin{align}\label{eq:system_measurement}
z^{i}_{k}=h^{i}_{k} \left(x_k, v^{i}_{k} \right), \quad i=1\ldots,N
\end{align}
where $h^{i}_{k}$ is, again, a possibly nonlinear and time-varying function of the state $x_k$ and $v^{i}_{k}$ is i.i.d. measurement noise. To perform the estimation task, the sensors communicate over a communication network given by the graph  $\mathcal{G}=(\mathcal{N},\mathcal{E})$, where $\mathcal{N}$ is the set of $N$ nodes in the network and $\mathcal{E} \subseteq \mathcal{N} \times \mathcal{N}$ is the set of communication links, such that if node $i$ is capable of communicating with node $j$, we have $(i,j) \in \mathcal{E}$. Moreover, we define the neighborhood of node $i$ as the set $\mathcal{N}_i=\{j | (i,j)\in \mathcal{E}\}$. 

\subsection{Bayesian Filtering and Opinion Pooling}

In order to obtain an estimate of the state $x_k$ in an iterative manner we resort to Bayesian filtering \cite{chen2003bayesian}. By making use of the Chapman-Kolmogorov equations, we can write the filter prediction step at the $i$-th node as 
\begin{align}\label{eq:prior}
p^i_k(x_k)=\int p_k^i(x_k | x_{k-1}) p_{k-1}^i(x_{k-1}) dx_{k-1},
\end{align}
where the distribution $p_k^i(x_k | x_{k-1})$ is governed by the system dynamics \eqref{eq:system_dynamics}. The new measurement $z_k^i$ is then used to compute the update step of the filter, given by Bayes' rule
\begin{align}\label{eq:posterior}
p^i_k(x_k | z_k^i)=\frac{p_k^i(z_k^i | x_k) p_k^i(x_k)}{\int p_k^i(z_k^i | x_k) p_k^i(x_k) dx_k}.
\end{align}
The iterative application of  \eqref{eq:prior}$-$\eqref{eq:posterior} form the Bayes' filtering algorithm. These expressions use only the local information available at each agent (the measurement $z_k^i$). A naive way of including the information of other agents in the network would be for agents to share all their measurements, leading to a centralized estimate $p^i_k(x_k | z_k^1,\ldots,z_k^N)$. However, this approach is communication intensive, as measurements must be propagated across the network. Intuition for a better approach can be gained from the optimal solution to the KL consensus problem \eqref{eq:minKL}, given by the following proposition.
\begin{proposition}[Logarithmic Opinion Pooling]\label{prop:logop}
The optimal solution of the KL consensus problem \eqref{eq:minKL}, namely $p^\star=\argmin_{p} \sum_{i=1}^{N} D_{\text{KL}}(p \| p_i)$ is given by
\begin{align}\label{eq:logop}
p^\star (x)=\frac{\displaystyle\prod\limits_{i \in \mathcal{N}_i} \biggl[p^i(x)\biggr]^{1/|\mathcal{N}_i|}}
{\displaystyle\int \prod\limits_{i \in \mathcal{N}_i} \biggl[p^i(x)\biggr]^{1/|\mathcal{N}_i|} dx}
\end{align}
\end{proposition}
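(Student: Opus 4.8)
The plan is to prove that the product form in \eqref{eq:logop} minimizes the sum of KL divergences by treating this as a convex variational problem over the space of probability densities. First I would observe that the objective $J(p) = \sum_{i=1}^{N} D_{\text{KL}}(p \| p^i)$ is strictly convex in $p$, since each $D_{\text{KL}}(p \| p^i)$ is convex in its first argument (the map $p \mapsto \int p \log p$ is convex and $-\int p \log p^i$ is linear in $p$). Strict convexity, together with the affine normalization constraint $\int p(x)\,dx = 1$ and the nonnegativity constraint $p \geq 0$, guarantees that a minimizer, if it exists, is unique; hence it suffices to exhibit one candidate satisfying the stationarity conditions.

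The main technical step is a Lagrangian/calculus-of-variations argument. I would form the Lagrangian
\begin{align}\label{eq:lagrangian}
\mathcal{L}(p,\lambda) = \sum_{i=1}^{N} \int p(x) \log\frac{p(x)}{p^i(x)}\,dx + \lambda\left(\int p(x)\,dx - 1\right),
\end{align}
and take the functional derivative with respect to $p(x)$, setting it to zero pointwise. Differentiating the integrand $\sum_i [p \log p - p \log p^i]$ gives $\sum_{i}[\log p(x) + 1 - \log p^i(x)] + \lambda = 0$, i.e. $N \log p(x) = \sum_i \log p^i(x) - (N + \lambda)$. Solving yields $p(x) \propto \prod_i [p^i(x)]^{1/N}$, so that after imposing normalization we recover exactly the geometric-mean (logarithmic pool) expression in \eqref{eq:logop}. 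The exponent $1/N$ matches $1/|\mathcal{N}_i|$ in the stated formula when the sum runs over the full set of $N$ agents.

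The step I expect to be the main obstacle is making the variational derivative rigorous rather than formal: strictly speaking one must justify the pointwise stationarity condition over the infinite-dimensional space of densities, for instance by considering admissible perturbations $p_\epsilon = p + \epsilon\,\eta$ with $\int \eta\,dx = 0$ and verifying $\frac{d}{d\epsilon} J(p_\epsilon)\big|_{\epsilon=0} = 0$ for all such $\eta$. A cleaner alternative that sidesteps the variational machinery is a direct algebraic verification: for the proposed $p^\star$, write $D_{\text{KL}}(p \| p^\star)$ and expand $\log p^\star = \frac{1}{N}\sum_i \log p^i - \log Z$ where $Z$ is the normalizer, which lets one show the identity $\sum_i D_{\text{KL}}(p \| p^i) = N\,D_{\text{KL}}(p \| p^\star) + C$ for a constant $C$ independent of $p$. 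Since $D_{\text{KL}}(p \| p^\star) \geq 0$ with equality iff $p = p^\star$, this immediately identifies $p^\star$ as the unique minimizer and is the route I would ultimately favor for a self-contained proof.
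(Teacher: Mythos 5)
Your proposal is correct, and your preferred route (the algebraic decomposition) is essentially the argument the paper relies on: the paper gives no proof of its own but defers to \cite[Proposition 1]{bandyopadhyay2014distributed}, where the result is established precisely via the identity $\sum_{i} D_{\text{KL}}(p \| p^i) = N\,D_{\text{KL}}(p \| p^\star) - N\log Z$ with $Z = \int \prod_i [p^i(x)]^{1/N} dx$, followed by nonnegativity of the KL divergence. Your instinct to favor that route over the variational one is right, since it sidesteps the functional-derivative rigor issues you correctly flag. The only points worth adding for a self-contained proof are the mild conditions under which the candidate $p^\star$ is well defined: one needs $0 < Z < \infty$, where $Z \leq 1$ follows from H\"older's (generalized AM--GM) inequality applied to the geometric mean of densities, and $Z > 0$ requires the $p^i$ to share a common support of positive measure; uniqueness of the minimizer is then a.e.\ uniqueness from the equality case of $D_{\text{KL}}(p\|p^\star) \geq 0$. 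Your reading of the exponent is also correct: the $1/|\mathcal{N}_i|$ in \eqref{eq:logop} is the paper's (slightly overloaded) notation for averaging over the relevant agent set, and with the sum in \eqref{eq:minKL} running over all $N$ agents the exponent is $1/N$ as you derive.
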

\begin{proof}
See \cite[Proposition 1]{bandyopadhyay2014distributed}.	
\end{proof}
In statistics this is known as the Logarithmic Opinion Pool (LogOP) operator \cite{genest1986combining}. The LogOP has several properties that make it attractive to being used iteratively. Importantly, the repeated application of this operator leads to consensus across the network. Motivated by this, we can conduct local Bayesian steps \eqref{eq:prior}$-$\eqref{eq:posterior} and perform LogOP consensus, where instead of aggregating over all nodes, agents communicate only with their set of neighbors $\mathcal{N}_i$. This can be guaranteed to converge to the optimal solution, as stated next
\begin{proposition}\label{prop:convergence}
Under mild conditions, the iterative application of the local Bayes filtering steps \eqref{eq:prior}$-$\eqref{eq:posterior} plus logarithmic opinion pooling \eqref{eq:logop}, with $\mathcal{N}=\mathcal{N}_i$ converges to the optimal solution $\lim_{k \to \infty} p^i_k = p^\star$ for all $i=1,\ldots,N$ agents.
\end{proposition}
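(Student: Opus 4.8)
The plan is to linearize the nonlinear LogOP recursion by passing to the logarithmic domain, where the combined Bayes-plus-pooling iteration becomes a standard linear consensus iteration to which Perron--Frobenius theory applies. Concretely, fix the local posteriors $p^i$ produced by one round of the prediction/update steps \eqref{eq:prior}--\eqref{eq:posterior}, and define the log-densities $\ell^i_k(x) = \log p^i_k(x)$. Restricting the aggregation in \eqref{eq:logop} to each neighborhood $\mathcal{N}_i$, the pooling step reads $\ell^i_{k+1}(x) = \sum_{j=1}^N W_{ij}\,\ell^j_k(x) - \log Z^i_k$, where $W_{ij}=1/|\mathcal{N}_i|$ for $j\in\mathcal{N}_i$ and $W_{ij}=0$ otherwise, and $Z^i_k$ is the ($x$-independent) normalizing constant. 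Thus, pointwise in $x$ and up to an additive normalization, the stacked vector of log-densities evolves under the linear map $W$.

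First I would establish the spectral properties of $W$. Under the mild conditions --- the communication graph $\mathcal{G}$ is connected, each node retains itself as a neighbor (so $i\in\mathcal{N}_i$, making $W$ aperiodic and hence primitive), and the weights are chosen to be doubly stochastic --- Perron--Frobenius theory gives $\lim_{k\to\infty} W^k = \tfrac{1}{N}\mathbf{1}\mathbf{1}^\top$. Applying this limit pointwise in $x$ to the log-densities shows that the unnormalized log-belief at every node converges to the common average $\tfrac{1}{N}\sum_{i=1}^N \log p^i(x)$. Exponentiating and reinstating the normalization recovers exactly the geometric-mean density of Proposition~\ref{prop:logop}, namely $p^\star$; since the limit is identical across all nodes, network-wide consensus is achieved.

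Next I would account for the interleaving of the Bayes steps with the pooling step. The cleanest route is a time-scale-separation (quasi-static) argument in which the pooling iteration is run to consensus on the fixed local posteriors before the next prediction/update; alternatively, one shows that the LogOP map is non-expansive in an appropriate metric, so that the pairwise disagreement $\ell^i_k - \ell^j_k$ contracts geometrically regardless of the slowly varying local terms. In either case the additive constants $\log Z^i_k$ are harmless, as they are independent of $x$ and are absorbed by the final renormalization that forces each $p^i_k$ to integrate to one.

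The hard part will be guaranteeing that the consensus limit is the \emph{uniform} geometric mean matching $p^\star$ of Proposition~\ref{prop:logop}, rather than a weight-skewed pool. The naive neighborhood-averaging weights $1/|\mathcal{N}_i|$ are only row-stochastic, and their left Perron eigenvector is in general non-uniform, so the limit would over-weight high-degree nodes; the remedy --- and the real content hidden in ``mild conditions'' --- is to require a symmetric, doubly stochastic weight design (e.g.\ Metropolis weights). A secondary technical obstacle is upgrading the pointwise-in-$x$ convergence of the log-densities to convergence of the normalized densities, and of the KL objective in \eqref{eq:minKL}, which calls for mild integrability and tail hypotheses on the local beliefs to justify passing the limit through the normalizing integral.
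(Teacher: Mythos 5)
The paper offers no proof of this proposition; it simply defers to \cite{bandyopadhyay2018distributed} (Theorem~5), so there is no in-paper argument to compare yours against. Your sketch is a faithful reconstruction of the standard route taken in that reference: pass to log-densities, observe that LogOP becomes a linear consensus iteration with matrix $W$, invoke Perron--Frobenius under connectivity and primitivity, and exponentiate back. You have also correctly located where the real content of ``mild conditions'' lives: with the raw $1/|\mathcal{N}_i|$ weights appearing in \eqref{eq:logop} and in the algorithms, $W$ is only row-stochastic, its left Perron vector is degree-skewed, and the consensus limit is a \emph{weighted} geometric mean rather than the minimizer of \eqref{eq:minKL}; a doubly stochastic (e.g.\ Metropolis) redesign, or a regular graph, is needed to recover the uniform pool of Proposition~\ref{prop:logop}. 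The one point where your sketch is thinner than what the cited theorem must actually deliver is the interleaving: the algorithm as stated performs a single pooling round per Bayes update, so $p^\star$ is itself a moving target indexed by the evolving local posteriors, and neither the quasi-static separation nor a bare non-expansiveness claim suffices on its own --- one needs a quantitative contraction of the disagreement that dominates the per-step perturbation injected by the prediction/update maps, together with the integrability conditions you flag for exchanging limits with the normalizing integral. As a proof \emph{sketch} this is sound and matches the cited source's strategy; as a complete proof it would need that contraction-versus-perturbation bound made explicit.
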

\begin{proof}
See \cite[Theorem 5]{bandyopadhyay2018distributed}. 
\end{proof}
\section{Value of Information (VoI)}

The usage of the LogOP operator allows to reduce network communication to a single broadcast transmission by each node per time step. Nonetheless, this can still result in a large amount of network usage. In order to further reduce the number of communication events we introduce the notion of Value of Information (VoI) \cite{mu2014efficient}. This notion is based on quantifying the information acquired at a given time instance and have only nodes with high VoI transmit. The VoI is measured by the KL divergence between the neighborhood fused and local distributions. This results in VoI transmit decisions when
\begin{align}\label{eq:voi}
D_{\text{KL}} \left( p^i_k(x_k) ~\|~ \tilde{p}^i_k(x_k) \right) \geq \gamma
\end{align}
where $\gamma$ denotes the censoring level and $\tilde{p}^i_k(x_k)$ corresponds to the local distribution obtained without performing the LogOP aggregation. For VoI under this level, the agent censors itself. The resulting algorithm is described in Algorithm \ref{alg:bayes}.

\begin{algorithm}[t]\caption{VoI Bayes Filtering}\label{alg:bayes} 
\begin{algorithmic}[1]
 \renewcommand{\algorithmicrequire}{\textbf{Input:}}
 \renewcommand{\algorithmicensure}{\textbf{Output:}}
 \For {$k=0,1\ldots$}
 \State Compute update using the measurement $z_k^i$
\begin{align*}
p^i_k(x_k | z_k^i)=\frac{p_k^i(z_k^i | x_k) p_k^i(x_k)}{\int p_k^i(z_k^i | x_k) p_k^i(x_k) dx_k}
\end{align*}
\State Transmit $p^i_k(x_k)$ if
\begin{align*}
D_{\text{KL}} \left( p^i_k(x_k) ~\|~ \tilde{p}^i_k(x_k) \right) \geq \gamma
\end{align*}
\State Aggregate neighboring estimates via LogOP
\begin{align*}
\bar{p}^i_k(x) =\frac{\displaystyle\prod\limits_{i \in \mathcal{N}_i} \bigl[p_k^i(x)\bigr]^{1/|\mathcal{N}_i|}}
{\displaystyle\int \prod\limits_{i \in \mathcal{N}_i} \bigl[p_k^i(x)\bigr]^{1/|\mathcal{N}_i|} dx}
\end{align*}
\State Compute local prediction step
\begin{align*}
\tilde{p}^i_k(x_k)=\int p_k^i(x_k | x_{k-1}) p_{k-1}^i(x_{k-1}) dx_{k-1}
\end{align*}  
\State Compute aggregated prediction step
\begin{align*}
p^i_k(x_k)=\int \bar{p}_k^i(x_k | x_{k-1}) \bar{p}_{k-1}^i(x_{k-1}) dx_{k-1}
\end{align*}  
\EndFor
\end{algorithmic}
\end{algorithm}

\subsection{VoI Kalman Filter}

While the Bayesian formulation provides intuition and guidance as to algorithmic design, implementing the Bayes' filter steps \eqref{eq:prior}$-$\eqref{eq:posterior} poses several computational challenges. In practice, particle filters \cite{arulampalam2002tutorial} or similar techniques are used to represent the distributions. Another possible approach is to consider simpler linear-Gaussian models. In this case, the Bayesian VoI takes a form reminiscent of a censored Kalman information filter \cite{battistelli2018distributed}. More specifically, consider the following dynamic process evolving over time $k$.
\begin{align}
x_{k+1}=A_k x_k + w_k,
\end{align}
where the noise is given by $w_k \sim \mathcal{N}(0,Q_k)$. Measurements are obtained by a set of $N$ sensors with the $i$-th sensor at time $k$ obtaining the following observation
\begin{align}
z^i_{k}=H^i_{k} x_k + v^i_{k},
\end{align}
with $v_{k}^i \sim \mathcal{N}(0,R_{k}^{i})$. Then, the Bayes steps can be particularized to obtain the filtering procedure described in Algorithm \ref{alg:kalman}. The linear-Gaussian case provides several computational advantages. Among them, the KL divergence can be computed in closed form as stated in the following proposition.

\begin{algorithm}[t]\caption{VoI Kalman Filtering}
\label{alg:kalman} 
\begin{algorithmic}[1]
 \renewcommand{\algorithmicrequire}{\textbf{Input:}}
 \renewcommand{\algorithmicensure}{\textbf{Output:}}
   \State Use information vector and matrix forms
	\begin{align*}
		y^i_{k|k}=(P^i_{k|k})^{-1} x_{k|k} \quad\quad Y^i_{k|k}=(P^i_{k|k})^{-1}
	\end{align*}
 \For {$k=0,1\ldots$}
  \State Compute update using the measurement $z_k^i$
\begin{align*}
	 i^i_k &= (H_k^i)^T (R_k^i)^{-1} z_k^i  & I^i_k &= (H_k^i)^T (R_k^i)^{-1} H_k^i \\
    y^i_{k|k} &= y^i_{k|k-1} + i^i_k        & Y^{i}_{k|k} &= Y^{i}_{k|k-1} + I^i_k
\end{align*}
  \State Transmit $(y^i_{k|k}, Y^{i}_{k|k})$ if
  \begin{align*}
	D_{\text{KL}}\bigl(\mathcal{N}(x^{i}_{k|k}, P^{i}_{k|k}) ~\|~ \mathcal{N}(\tilde{x}^{i}_{k|k-1}, \tilde{P}^{i}_{k|k-1})\bigr) \geq \gamma
  \end{align*}
  \State Aggregate neighboring estimates via LogOP
\begin{align*}
\bar{y}^{i}_{k|k} &= \frac{1}{| \mathcal{N}_i |} \biggl( y^{i}_{k|k} + \sum_{j \in \mathcal{N}_i} y^{j}_{k|k} \biggr) \\
\bar{Y}^{i}_{k|k} &= \frac{1}{| \mathcal{N}_i |} \biggl( Y^{i}_{k|k} + \sum_{j \in \mathcal{N}_i} Y^{j}_{k|k}  \biggr)
\end{align*}	
  \State Compute local prediction step
\begin{align*}
\tilde{M}^i_k &= (A_k^{-1})^T Y^i_{k|k} A_k^{-1}\\
\tilde{y}^i_{k+1|k} & = (I+M^i_k Q_k)^{-1} (A_k^{-1})^T y^i_{k|k} \\ 
\tilde{Y}^{i}_{k+1|k} &= (I+M^i_k Q_k)^{-1} M^i_k
\end{align*}	
  \State Computed aggregated prediction step
\begin{align*}
M^i_k &= (A_k^{-1})^T \bar{Y}^i_{k|k} A_k^{-1}\\
y^i_{k+1|k} & = (I+M^i_k Q_k)^{-1} (A_k^{-1})^T \bar{y}^i_{k|k} \\ 
Y^{i}_{k+1|k} &= (I+M^i_k Q_k)^{-1} M^i_k
\end{align*}	

\EndFor
\end{algorithmic}
\end{algorithm}

\begin{proposition}\label{prop:klgaussian}
The KL divergence between the estimates $\mathcal{N}(x^{i}_{k|k}, P^{i}_{k|k})$ and $\mathcal{N}(\tilde{x}^{i}_{k|k-1}, \tilde{P}^{i}_{k|k-1})$ is given by
\begin{align}
&D_{\text{KL}}\left(\mathcal{N}(x^{i}_{k|k}, P^{i}_{k|k}) \| \mathcal{N}(\tilde{x}^{i}_{k|k-1}, \tilde{P}^{i}_{k|k-1})\right) \nonumber\\ 
& = \frac{1}{2} \biggl( \bigl( \tilde{x}^i_{k|k-1}  - x^i_{k|k} \bigr)^T \bigl(\tilde{P}^{i}_{k|k-1}\bigr)^{-1}\bigl( \tilde{x}^i_{k|k-1}  - x^i_{k|k} \bigr) \nonumber\\
& + \tr \biggl( \bigl(\tilde{P}^{i}_{k|k-1}\bigr)^{-1} P^{i}_{k|k}\biggr) - n + \log\biggl( \frac{\det \tilde{P}^{i}_{k|k-1} }{ \det P^{i}_{k|k} } \biggr) \biggr)
\end{align}
\end{proposition}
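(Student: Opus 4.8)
The plan is to compute the divergence directly from its integral definition, specializing to the two Gaussian densities. Write $p = \mathcal{N}(\mu_0, \Sigma_0)$ with $\mu_0 = x^i_{k|k}$, $\Sigma_0 = P^i_{k|k}$, and $q = \mathcal{N}(\mu_1, \Sigma_1)$ with $\mu_1 = \tilde{x}^i_{k|k-1}$, $\Sigma_1 = \tilde{P}^i_{k|k-1}$, both densities on $\R^n$. Since $D_{\text{KL}}(p\|q) = \E_p[\log p(x) - \log q(x)]$, I would first substitute the explicit log-density $\log \mathcal{N}(x;\mu,\Sigma) = -\tfrac{n}{2}\log(2\pi) - \tfrac{1}{2}\log\det\Sigma - \tfrac{1}{2}(x-\mu)^T\Sigma^{-1}(x-\mu)$. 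The $2\pi$ normalizing constants cancel in the difference, leaving a log-determinant term $\tfrac{1}{2}\log(\det\Sigma_1/\det\Sigma_0)$ and two quadratic forms that must be integrated against $p$.

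The key step is evaluating those two Gaussian expectations using the trace identity $a^T M a = \tr(M a a^T)$. For the first, $\E_p[(x-\mu_0)^T\Sigma_0^{-1}(x-\mu_0)] = \tr(\Sigma_0^{-1}\E_p[(x-\mu_0)(x-\mu_0)^T]) = \tr(\Sigma_0^{-1}\Sigma_0) = n$, since the inner expectation is by definition the covariance $\Sigma_0$. For the second, I would expand $x-\mu_1 = (x-\mu_0)+(\mu_0-\mu_1)$ and distribute; the cross terms carry a factor $\E_p[x-\mu_0]=0$ and vanish, yielding $\E_p[(x-\mu_1)^T\Sigma_1^{-1}(x-\mu_1)] = \tr(\Sigma_1^{-1}\Sigma_0) + (\mu_0-\mu_1)^T\Sigma_1^{-1}(\mu_0-\mu_1)$.

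Collecting the three contributions — the log-determinant ratio, the $-\tfrac{1}{2}n$ coming from the first quadratic form, and the second quadratic form — gives exactly the claimed expression, after noting $(\mu_0-\mu_1)^T\Sigma_1^{-1}(\mu_0-\mu_1) = (\mu_1-\mu_0)^T\Sigma_1^{-1}(\mu_1-\mu_0)$ by symmetry of $\Sigma_1^{-1}$. I do not anticipate a genuine obstacle: this is the standard closed form for the KL divergence between two multivariate Gaussians, and the only step requiring any care is the trace manipulation together with verifying that the first expectation contributes the $-n$ and the second supplies the $\tr\bigl((\tilde{P}^i_{k|k-1})^{-1}P^i_{k|k}\bigr)$ term. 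Substituting back the filter-specific symbols $\mu_0,\Sigma_0,\mu_1,\Sigma_1$ then completes the proof.
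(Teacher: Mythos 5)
Your proposal is correct and follows essentially the same route as the paper's proof: both start from $D_{\text{KL}}(p\|q)=\E_p[\log p - \log q]$, substitute the Gaussian log-densities, and evaluate the two quadratic-form expectations via the trace identity (the paper invokes $\E[(x-y)^TA(x-y)]=(\mu_x-y)^TA(\mu_x-y)+\tr(A\Sigma_x)$ as a known fact, which is exactly what you derive by expanding $x-\mu_1=(x-\mu_0)+(\mu_0-\mu_1)$). No gaps.
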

\begin{proof}
See Appendix \ref{app:klgaussian}.	
\end{proof}

Another important property is the fact that the estimate pairs $(x_{k|k}, P_{k|k})$ are consistent for all iterates, as stated next.
\begin{proposition}\label{prop:consistency}
The iterates $(x^i_{k|k}, P^i_{k|k})$ generated by Algorithm \ref{alg:kalman} are consistent. That is, for all $k$ and $i$, we have
\begin{align}
\mathbb{E}\left[\left(x-x^i_{k|k}\right) \left(x-x^i_{k|k}\right)^T \right] \leq P^i_{k|k}.
\end{align}
\end{proposition}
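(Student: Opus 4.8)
The plan is to prove the claim by induction on the time index $k$, showing that consistency is preserved by each of the three operations carried out in one iteration of Algorithm \ref{alg:kalman}: the local measurement update, the LogOP aggregation, and the prediction step. I would initialize the filter with a consistent estimate so the base case holds by construction, and take as inductive hypothesis that the aggregated predictions $(x^j_{k|k-1}, P^j_{k|k-1})$ are consistent for every agent $j$. Throughout I write $e^j = x - x^j$ for the error of a given estimate and $\Sigma^j = \mathbb{E}[e^j (e^j)^T]$ for its true covariance, so that consistency of an estimate reads $\Sigma^j \leq (Y^j)^{-1}$ in information form.

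First I would treat the measurement update. Substituting $z^i_k = H^i_k x + v^i_k$ into $y^i_{k|k} = y^i_{k|k-1} + (H^i_k)^T (R^i_k)^{-1} z^i_k$ and using $Y^i_{k|k} = Y^i_{k|k-1} + I^i_k$ with $I^i_k = (H^i_k)^T (R^i_k)^{-1} H^i_k$, the posterior error becomes $e^i_{k|k} = (Y^i_{k|k})^{-1}[\,Y^i_{k|k-1} e^i_{k|k-1} - (H^i_k)^T (R^i_k)^{-1} v^i_k\,]$. Since the measurement adds exactly the information $I^i_k$ with the correct noise covariance $R^i_k$, computing the covariance and applying the inductive hypothesis $\Sigma^i_{k|k-1} \leq (Y^i_{k|k-1})^{-1}$ yields $\Sigma^i_{k|k} \leq (Y^i_{k|k})^{-1} = P^i_{k|k}$, which is exactly the consistency asserted for $(x^i_{k|k}, P^i_{k|k})$.

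The central and most delicate step will be the LogOP aggregation, which in information form is a convex combination $\bar{Y}^i_{k|k} = \sum_j \omega_{ij} Y^j_{k|k}$ with nonnegative weights summing to one, and analogously for the information vector. The hard part is that the individual agent errors $e^j$ are correlated in an unknown way, since information has already propagated through the network, so one cannot simply add the per-agent covariances. I would instead invoke the covariance-intersection argument: writing the fused error as $\bar{e}^i = (\bar{Y}^i_{k|k})^{-1} \sum_j \omega_{ij} Y^j_{k|k} e^j$, testing against an arbitrary vector $a$, and setting $c_j = Y^j_{k|k} (\bar{Y}^i_{k|k})^{-1} a$, convexity of the square over the weights $\omega_{ij}$ gives $\mathbb{E}[(a^T \bar{e}^i)^2] \leq \sum_j \omega_{ij}\, c_j^T \Sigma^j c_j$. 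Invoking the per-agent consistency $\Sigma^j \leq (Y^j_{k|k})^{-1}$ and using the identity $\sum_j \omega_{ij} Y^j_{k|k} = \bar{Y}^i_{k|k}$ collapses the right-hand side exactly to $a^T (\bar{Y}^i_{k|k})^{-1} a$. As $a$ is arbitrary, this gives $\bar{\Sigma}^i_{k|k} \leq \bar{P}^i_{k|k}$ regardless of the cross-correlations, which is the conservative property that makes the convex LogOP fusion consistency-preserving and is the only point at which the unknown inter-agent correlations must be controlled.

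Finally I would handle the prediction step and the censoring. The information-form prediction is equivalent to the covariance recursion $P^i_{k+1|k} = A_k \bar{P}^i_{k|k} A_k^T + Q_k$, and since the state propagates as $e^i_{k+1|k} = A_k \bar{e}^i_{k|k} + w_k$ with $w_k$ independent of the estimation error and of covariance exactly $Q_k$, monotonicity of $\Sigma \mapsto A_k \Sigma A_k^T + Q_k$ carries $\bar{\Sigma}^i_{k|k} \leq \bar{P}^i_{k|k}$ to $\Sigma^i_{k+1|k} \leq P^i_{k+1|k}$, closing the induction. It remains to account for VoI censoring: when a neighbor withholds its estimate, agent $i$ replaces the missing term in the convex combination by the corresponding consistent prediction it already maintains locally, so every matrix entering the aggregation is consistent and the argument above applies verbatim. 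This is precisely why consistency is the enabling property for censoring---stale or predicted substitutes remain admissible in the fusion without compromising the covariance bound.
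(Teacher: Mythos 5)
Your proof is correct and follows essentially the same route as the paper's: it establishes consistency of the LogOP fusion via the covariance-intersection argument (a convex combination in information space is conservative regardless of the unknown cross-correlations) and consistency of the prediction via monotonicity of the Riccati-type map, while additionally spelling out the measurement-update step and the explicit test-vector/convexity computation that the paper delegates to citations. One small mismatch with Algorithm 2: under censoring the algorithm simply drops the withheld neighbors and renormalizes by the count of received estimates rather than substituting locally maintained predictions of the neighbors, but either way the fused quantity remains a convex combination of consistent estimates, so your argument goes through unchanged.
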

\begin{proof}
See Appendix \ref{app:consistency}.	
\end{proof}
Thus, the proposed estimation mechanism avoids the possible divergence of belief across the network. This is an issue that plagues consensus-based Kalman filters \cite{olfati2009kalman,olfati2007distributed,kamgarpour2008convergence,cattivelli2010diffusion}, which focus only in guaranteeing consensus of the estimate and not the belief of the agents. Furthermore, as numerically explored in the next section. The consistency of estimates plays an important role in enabling intelligent censoring decisions.

\begin{figure*}[t!]	
	\centering
	\includegraphics[scale=1]{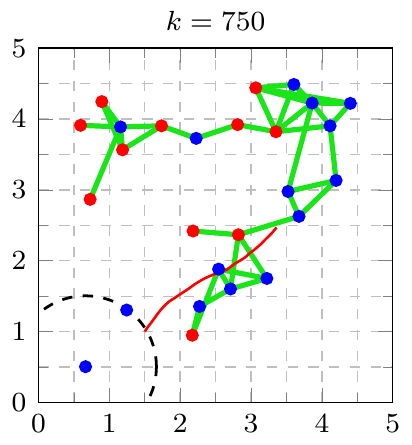} 					
	\includegraphics[scale=1]{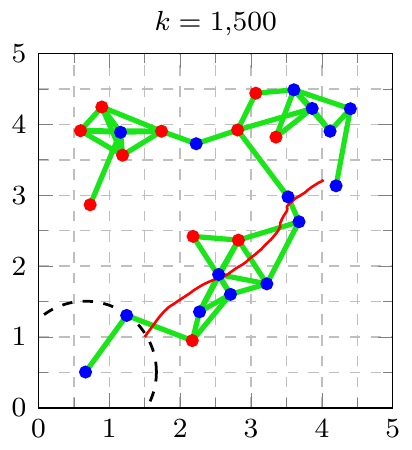} 					
	\includegraphics[scale=1]{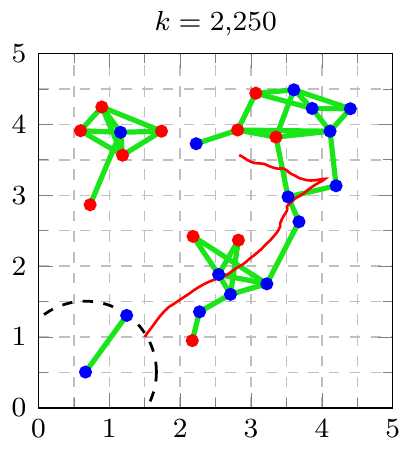} 					
	\includegraphics[scale=1]{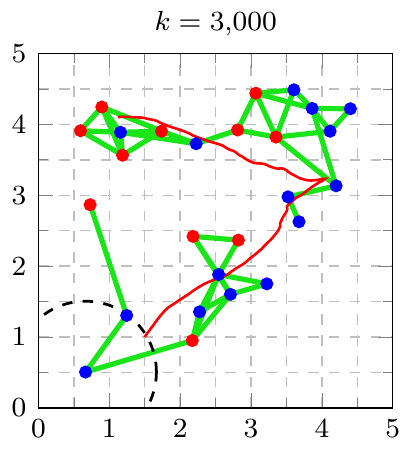} 					
	\caption{Node positions and resulting time-varying network connectivity. Four snapshots are shown, corresponding to time instances $k=750$, $k=1{,}500$, $k=2{,}250$ and $k=3{,}000$. Distances in the axis are in kilometers. Blue dots correspond to ranging (TOA) nodes and red dots correspond to bearing (DOA) nodes. The sensing radius of an agent is 1 km, shown by the dashed circle centered on the lower left node. The trajectory of the target is shown in red. The initial state of the target is $x_0=[1500, 8, 1000, 12]^T$ and at time $k=1{,}500$ it changes trajectory by setting its velocities to $(\dot{x},\dot{y})=(-8,0.1)$.}
	\label{fig:dynamic_map_connectivity}
\end{figure*}


\section{Numerical Results}
\label{sec:numericals}

This section numerically investigates the properties of the proposed distributed estimation algorithm. We evaluate the filtering algorithm in a target tracking problem \cite{bandyopadhyay2018distributed,bar2004estimation}. Assume that the linear-Gaussian target dynamics evolve as 
\begin{equation}
x_{k+1}=A x_k + w_k,
\end{equation}
where the state of the system at the $k$-th time instance $x_k=[x,\dot{x},y,\dot{y}]^T$ is composed of is its position $(x,y)$ along the x- and y-axis, together with its corresponding velocities $(\dot{x},\dot{y})$. The dynamics follow a nearly-constant velocity model with
\begin{align}
A=
\begin{bmatrix}
1 & \Delta & 0 & 0\\
0 & 1 & 0 & 0\\
0 & 0 & 1 & \Delta \\
0 & 0 & 0 & 1
\end{bmatrix},
\end{align}
where $\Delta$ corresponds to the sampling interval. Furthermore, the process noise is Gaussian according to $w_k \sim \mathcal{N}(0,Q)$, where the covariance matrix $Q$ is given by \cite{bar2004estimation}
\begin{align}
Q=
\begin{bmatrix}
\frac{\Delta^3}{3} & \frac{\Delta^2}{2} & 0 & 0\\
\frac{\Delta^2}{2} & \Delta & 0 & 0\\
0 & 0 & \frac{\Delta^3}{3} & \frac{\Delta^2}{2}\\
0 & 0 & \frac{\Delta^2}{2} & \Delta
\end{bmatrix}.
\end{align}
A network of agents is tasked with tracking this target. We consider a random heterogenous deployment of both Time-Of-Arrival (TOA) and Direction-Of-Arrival (DOA) nodes. These nodes only obtain measurements when the target is in their limited sensing range. When the target is inside this range, the measurement acquired is
\begin{align}\label{eq:measurement_model_num}
z^{i}_{k}=
\begin{cases}
\sqrt{(x_k - x^i)^2 + (y_k - y^i)^2} +v_{k}^{i,r}  &\text{if TOA}\\
\arctan\bigl(\frac{x_k-x^i}{y_k-y^i} \bigr)+v_{k}^{i,\theta} &\text{if DOA}
\end{cases}
\end{align}
where $(x^i,y^i)$ denotes the position of the $i$-th agent in the network. The measurement noise is distributed according to $v_{k}^{i,r} \sim \mathcal{N}(0,\sigma_r)$ with $\sigma_r=1.5$ m, and $v_{k}^{i,\theta} \sim \mathcal{N}(0,\sigma_\theta)$ with $\sigma_\theta= 2^{\circ}$. The usual extended Kalman filter approach is used to address the nonlinearity of the measurement model.

In order to track the target, the nodes conduct distributed estimation using the VoI filter (Algorithm \ref{alg:kalman}). As part of this process, they share information with each other using a communication network. In order to simulate a realistic communication environment in which VoI filtering can prove useful, we resort to underwater acoustic communication channels. The underwater communication channel suffers from severe fading and is low bandwidth, rendering traditional (non-censoring) distributed estimation approaches difficult to successfully operate in practice. To obtain this communication network we resort to a statistical characterization of the underwater channel \cite{qarabaqi2013statistical}, including underwater path-loss, multi-path propagation via raytracing, deviations due to underwater current drift and deviations due to surface waves. The received signal strength at each link is used to detect for each transmission a successful or unsuccessful message delivery. This results in a dynamic network topology as shown in Fig. \ref{fig:dynamic_map_connectivity}, wherein the network is not always connected, nodes become isolated at times, yet the network becomes periodically strongly connected.

\begin{figure}[t]
	\centering
	\includegraphics[scale=1]{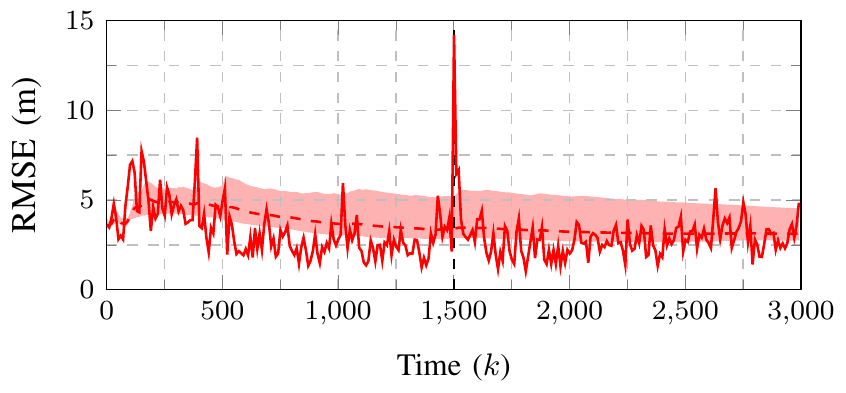} 
	\vspace*{-8mm}
	\caption{Estimation error (RMSE) averaged across the network nodes. The error at the $k$-th time instance is shown by the solid line and the dashed line corresponds to the running average of the estimation error. The shaded area covers the best and worst individual sensor estimation error across the network. The VoI filter operates at a censoring level of $\gamma=0.4$. A spike in estimation error occurs immediately after the maneuver event at $k=1{,}500$, from which the system recovers rapidly.}
	\label{fig:dynamic_mse}
\end{figure}

We conduct simulations in the environment shown in Fig. \ref{fig:dynamic_map_connectivity}, wherein the task of the nodes is to track the target. We first evaluate the estimation error of the task, plotted in Fig. \ref{fig:dynamic_mse}. Overall, the RMSE across the network is shown to be asymptotically stable, hence the VoI filter is capable of tracking the target. Due to the nature of the network and the censoring process, nodes converge to a region surrounding the average asymptotic RMSE. An important event in this simulation is the change of direction that the target conducts at $k=1{,}500$. Since the agents have no information about this maneuver and use a linear model to compute their prediction step, the RMSE immediately spikes. Nonetheless, the measurements obtained immediately after the target changes course have high VoI, and result in an increase of transmissions that results in a quick correction of the estimates.

\begin{figure}[t]
	\centering
	\includegraphics[scale=1]{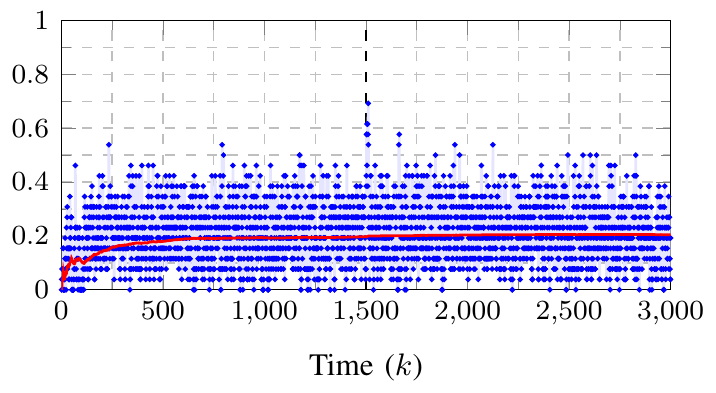} \hspace{-4ex}			
	\includegraphics[scale=1]{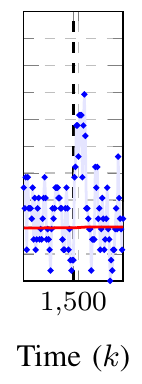} 	
	\vspace*{-4mm}
	\caption{Medium access averaged across the network nodes, i.e., fraction of agents deciding to communicate at a given time instance. Each of the dots corresponds to a time instance, while the average is shown in red. On the right, we focus on the maneuver event at $k=1{,}500$. After the maneuver, communication increases briefly to quickly correct the estimates.}
	\label{fig:dynamic_tx}
\end{figure}

We can also observe how the system makes use of the communication resources to react to unexpected events. In Figure \ref{fig:dynamic_tx} we plot the medium access decisions across the network. On average, around $20\%$ of the nodes transmit at any given instant. However, there is high variability across time (e.g., many time instances with complete radio silence by all the nodes). An important observation of the simulation is that the change of trajectory at $k=1{,}500$, besides creating a spike in RMSE as shown in Fig. \ref{fig:dynamic_mse}, also causes a short-lived increase in communication in order to re-stabilize the estimation error. As the change of direction produces measurements with higher VoI, nodes become more active in order to propagate this new information across the network. During this simulation, the time instance with the highest level of medium access, around $70\%$, occurs after this event.

Finally, in order to fully evaluate the performance of the VoI filter, we plot the trade-off between RMSE and medium access, shown in Fig. \ref{fig:dynamic_gamma}. This plot is created by sweeping over a range of possible censoring values $\gamma$. The value $\gamma=0$ corresponds to a system without censoring (all nodes are always transmitting), and as the value of $\gamma$ increases, nodes begin to censor and the RMSE progressively increases, resulting in the operating region shown in Fig. \ref{fig:dynamic_gamma}. This results in a boundary of possible operating points for the VoI algorithm. For example, the value $\gamma=0.4$, used in the previous simulations (Fig. \ref{fig:dynamic_mse} and \ref{fig:dynamic_tx}) uses only $19\%$ of the communication resources with performance similar to the non-censoring case. Further reductions in communication usage become progressively more expensive in resulting RMSE level as the system approaches the knee of its operating curve. Ultimately, the choice of $\gamma$ depends on the application and the desired trade-off between estimation error and communication usage. 

Furthermore, observe that both the VoI censoring decision and the consistency of the estimates play a crucial role in the required communication resources. In Fig. \ref{fig:dynamic_gamma} we have also characterized the result of using the proposed VoI filtering algorithm without covariance information in the censoring decision ($P_{k|k}^{i}=I$ and $\tilde{P}^{i}_{k|k-1}=I$ in Proposition \ref{prop:klgaussian}). This corresponds to a censoring decision dictated simply by the euclidean norm $\| x^{i}_{k|k} - \tilde{x}^{i}_{k|k-1} \|$. The result is an increase of the average error. More importantly, the variance across the network nodes spreads drastically, highlighting the importance of including the estimate belief in the censoring decision. However, this algorithm is still based on the LogOP operation, yielding consistent estimates. This requires agents to communicate covariance matrices with their neighbors. In order to see the effect of a lack of consistency, we consider now the diffusion Kalman filter \cite{cattivelli2010diffusion}. This standard filtering methodology does not communicate covariance matrices (and therefore produces inconsistent estimates), however it can support censoring decisions (which standard consesus Kalman filters do not \cite{olfati2009kalman}). In this case, the lack of consistent estimates results in an even larger increase of the estimation error, with a drastically reduced operating range of the filter (e.g., regions under $10\%$ of communication usage are unachievable).

\begin{figure}[t]
	\centering
	\includegraphics[scale=1]{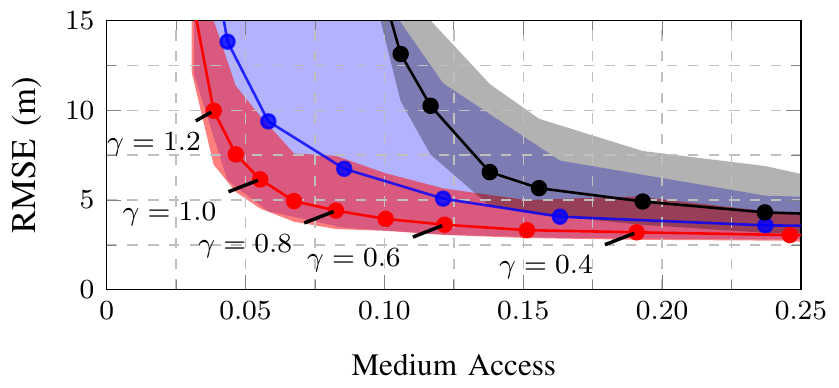} 			
\vspace*{-4mm}
\caption{Operating region of the VoI filter shown by the red line. Asymptotic MSE and medium access are obtained for different values of the censoring level $\gamma$. The shaded area correspond to the range of RMSE obtained by the best and worst individual sensor in the network. The blue curve corresponds to a VoI filter censoring without covariance information, i.e., $P_{k|k}^{i}=I$. The black curve corresponds to performing VoI filtering in a diffusion Kalman filter, which does not produce consistent estimates.}
	\label{fig:dynamic_gamma}
\end{figure}

\section{Experiments}

To further validate the feasibility of the proposed distributed filtering approach we conducted experimental evaluations. These tests were conducted using a set of Qualcomm Flight Pro platforms, an integrated computing board powered by a Qualcomm Snapdragon 820 processor. These boards integrate a Qualcomm QCA6174A communication chip, providing IEEE 802.11 networking capabilities. Communications were configured to operate in IBSS (ad-hoc) mode. Thus, the nodes provide their own communication network, without the need to resort to any previously available wireless communication infrastructure. The filtering algorithm is implemented in ROS and fully runs onboard the nodes in a distributed manner.

We deployed 5 nodes as shown in Fig.~\ref{fig:env}. The placement of the nodes provides a complex communication environment, providing line-of-sight links as well as the need to relay messages across the network. The agents are tasked with estimating a virtual target traversing the environment and following the dynamics introduced in Section \ref{sec:numericals}. The agents are synchronized and share the same random seed, hence they see the same trajectory. To this end Nodes 1, 3 obtain ranging measurements and Node 4 obtains bearing measurements, while Nodes 2 and 5 do not have any sensing capabilities. The measurement model follows \eqref{eq:measurement_model_num}. To track the target, the agents use the VoI filter with censoring level $\gamma=0.15$ (Algorithm \ref{alg:kalman}) over a real wireless network.

The resulting estimation error of this task, measured by the RMSE, is shown in Fig. \ref{fig:exp_mse_nodes}. As expected, the estimation error across the network (shown by the dashed line) is asymptotically stable, thus the agents successfully track the target. Further, we also plot individual instantaneous errors of the agents with sensing capabilities. Nodes 3 and 4 have similar errors, which can be expected as they occupy similar (mirrored) positions in the network topology. In contrast, Node 1 exhibits instances of higher RMSE, which occur due to its isolation in the network, being only able to communicate with Node 2. In any case, this limited communication with a single neighbor is sufficient, as Node 2 achieves asymptotic stability in its estimation error.

Similar behavior can be observed by looking at the bandwidth usage of the algorithm, shown in Fig. \ref{fig:exp_tx_all}. This corresponds to the actual data traffic generated by the nodes using the VoI filter in order to perform the estimation task. Nodes 3 and 4 have similar network usage, while Node 1, being somewhat isolated in the network, has less network usage. Further, observe that nodes acquiring measurements (source nodes) contribute to the majority of the traffic in the network, rather than the nodes that can only act as relays (black solid vs dashed line). Importantly, the estimation tasks requires very little communication resources ($1.67$ kbps).

This is better evaluated by the trade-off between estimation error and data rate. In a manner similar to Fig. \ref{fig:dynamic_gamma} in the previous section, we have now experimentally characterized the operating region of the VoI filter, shown in Figure \ref{fig:exp_gamma}. As previously seen in simulation, error is traded to obtain a reduction in communication usage. Nonetheless, large reductions in data rate can be obtained without significant impact on the RMSE. This is the case of the previous experiment with censoring level $\gamma=0.15$. That experiment utilized $1.67$ kbps obtaining similar RMSE to an experiment without censoring ($\gamma=0$). However, the latter required $18.03$ kbps to perform the same estimation task.

\begin{figure}[t]
	\centering
	\includegraphics[scale=1]{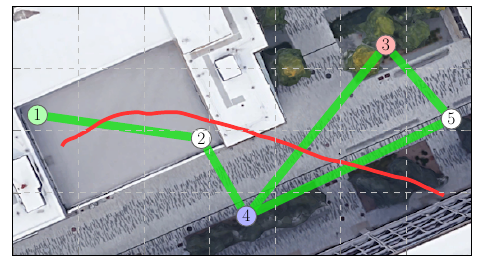} \hspace{-2ex}			
	\includegraphics[scale=1]{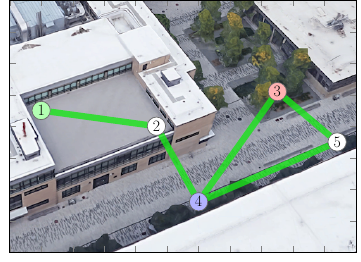} 	
	\vspace*{-4mm}
	\caption{Deployment location of the nodes in the experiment. The area corresponds to building 31 and its surroundings at the MIT campus. Nodes 1 and 2 are located atop of a building. The grid shown on the left corresponds to a spacing of 10 m. Line-of-sight connections between the nodes are shown in green. The trajectory of the target is shown in red.}
	\label{fig:env}
\end{figure}

\begin{figure}[t]
	\centering
	\includegraphics[scale=0.975]{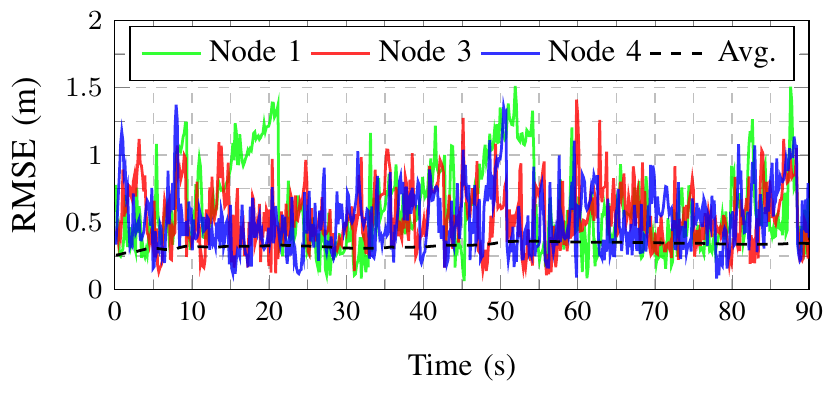} 		
	\vspace*{-4mm}
\caption{Estimation error resulting from a the experiment. Instantaneous RMSE is shown for the source Nodes $1$, $3$ and $4$. The running average over all the nodes is shown by the dashed line. The system is asymptotically stable across the network.}
	\label{fig:exp_mse_nodes}
%
	\centering
	\vspace*{2mm}	
	\includegraphics[scale=0.975]{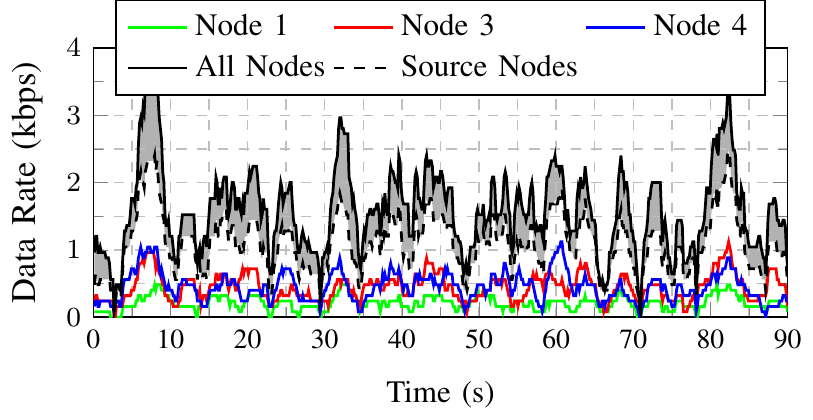} 			
	\vspace*{-4mm}
	\caption{Bandwidth usage of the estimation task during the experiment.}
	\label{fig:exp_tx_all}
%
	\centering
	\includegraphics[scale=0.975]{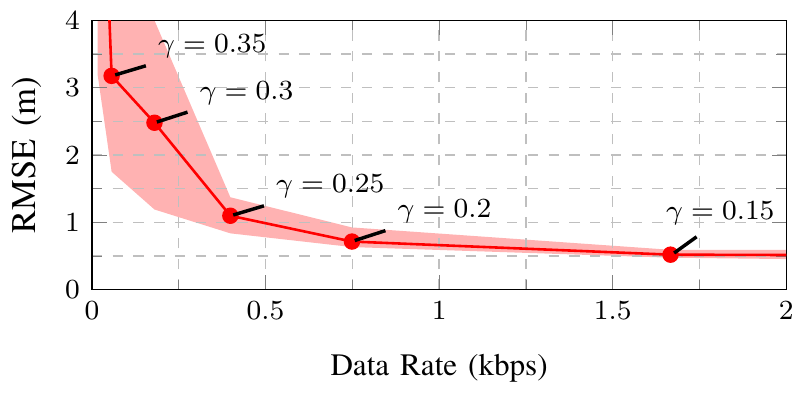} 
	\vspace*{-4mm}
	\caption{Operating region of the VoI filter. RMSE vs total data rate consumed in the network. A non-censoring ($\gamma=0$) approach utilized $18.03$ kbps.}
	\label{fig:exp_gamma}
\end{figure}

\section{Conclusions}

In this paper, we have proposed a distribution estimation algorithm with censoring decisions based on Value of Information (VoI). For linear-Gaussian dynamics, this approach has been shown to take a form similar to a censored distributed Information filter. Importantly, the filtering process produces consistent estimates across all nodes in the network. The VoI censoring decision plus consistency in the estimates plays a crucial role in allowing for large reductions in communication usage. This behavior has been verified in target tracking simulations over complex time-varying networks. Furthermore, these results have also been experimentally validated over wireless networks. 

\bibliographystyle{ieeetr}
\bibliography{bib}


\appendix

\section{Appendix}

\subsection{Proof of Proposition \ref{prop:klgaussian}}\label{app:klgaussian}
\begin{proof}
First, recall that the density for a Gaussian distribution with mean $\hat{x}$ and covariance matrix $\hat{P}$ is given by
\vspace{-1ex}
\begin{align*}
p(x)=\frac{1}{\sqrt{(2 \pi)^n \det \hat{P}}} 
\exp \left(-\frac{1}{2}
\bigl(x - \hat{x} \bigr)^T \hat{P}^{-1} \bigl(x - \hat{x} \bigr),
 \right)
\end{align*}
\vspace{-1ex}
and the Kullback-Leibler divergence can be rewritten as
\begin{align*}
D_{\text{KL}}(p \| q)&= \int p(x) \log \left(\frac{p(x)}{q(x)}\right) dx =\mathbb{E}_p \left[\log p - \log q\right]
\end{align*}
Particularizing the KL divergence for the case of two Gaussian distributed variables $(x_1,P_1)$ and $(x_2,P_2)$, we have 

\begin{align}
D_{12}& \triangleq D_{\text{KL}}\left(\mathcal{N}(x_1, P_1) ~\|~ \mathcal{N}(x_2, P_2)\right)= \nonumber \\
& \frac{1}{2}\mathbb{E}_1 \biggl[ - \log\det P_1 -\left(x-x_1\right)^T P_1^{-1} \left(x-x_1\right)  \nonumber\\
& \quad \quad + \log\det P_2 + \left(x-x_2\right)^T P_2^{-1} \left(x-x_2\right)\biggr].
\end{align}
By factoring terms factor and using the fact that we can write $\left(x-x_1\right)^T P_1^{-1} \left(x-x_1\right)=\tr \left(\left(x-x_1\right)^T P_1^{-1} \left(x-x_1\right)\right)$, and further using the cyclic property of the trace operator, $\tr (ABC)=\tr (BCA)=\tr (CAB)$, we can rewrite
\begin{align}
D_{12} = \frac{1}{2}
\mathbb{E}_1 \biggl[
& - \tr\left(P_1^{-1}\left(x-x_1\right)\left(x-x_1\right)^T \right) \\
& + \tr\left(P_2^{-1}\left(x-x_2\right) \left(x-x_2\right)^T\right) 
+ \log \frac{\det P_2}{\det P_1} \biggr] \nonumber
\end{align}
Then by taking the expectation, expanding the first two terms and using the fact that $\mathbb{E}\left[(x-y)^TA(x-y)\right]=(\mu_x-y)^T A (\mu_x-y)+\tr(A\Sigma_x)$ we write
\begin{align}
D_{12} = \frac{1}{2} \biggl[
&  \left(x_2-x_1\right)^T P_2^{-1} \left(x_2-x_1\right) \nonumber\\
&  \quad\quad+ \tr \left(P_2^{-1} P_1\right) - n + \log \frac{\det P_2}{\det P_1} \biggr] 
\end{align}
Then substituting $(x_1,P_1)=(x^{i}_{k|k}, P^{i}_{k|k})$ and $(x_2,P_2)=(\tilde{x}^{i}_{k|k-1}, \tilde{P}^{i}_{k|k-1})$ leads to the desired expression.
\end{proof}

\subsection{Proof of Proposition \ref{prop:consistency}}\label{app:consistency}
\begin{proof}
In order to verify consistency, the first step is to guarantee that the LogOP merging operation is consistent. Remember the definitions of the information forms used in Algorithm \ref{alg:kalman}. Namely, $y^i_{k|k}=(P^i_{k|k})^{-1} x^{i}_{k|k}$ and $Y^i_{k|k}=(P^i_{k|k})^{-1}$. This allows us to rewrite the LogOP update as
\begin{align}
\bar{y}^{i}_{k|k} &= \frac{1}{| \mathcal{N}_i |} \biggl( (P^i_{k|k})^{-1}x^{i}_{k|k} + \sum_{j \in \mathcal{N}_i} (P^j_{k|k})^{-1}x^{j}_{k|k} \biggr) \\
\bar{Y}^{i}_{k|k} &= \frac{1}{| \mathcal{N}_i |} \biggl( (P^i_{k|k})^{-1} + \sum_{j \in \mathcal{N}_i} (P^j_{k|k})^{-1}  \biggr)
\end{align}
This can be seen as a generalized form of covariance intersection \cite{julier1997non}. Censoring modifies the number of neighbors observed $\mathcal{N}_i$, maintaining consistency due to the $1 / |\mathcal{N}_i|$ weighting. Then, we must show consistency of the prediction step of the algorithm, given by
\begin{align*}
Y^{i}_{k+1|k} &= (I+M^i_k Q_k)^{-1} M^i_k \quad M^i_k &= (A_k^{-1})^T \bar{Y}^i_{k|k} A_k^{-1}.
\end{align*}
by using the properties  $(I+AB)^{-1}=I-A(I+BA)^{-1}B$ and $(I+A^{-1})^{-1}=A(A+I)^{-1}$, we rewrite the update as
\begin{align}
Y^{i}_{k+1|k} &= (A_k^{-1})^T \bar{Y}^i_{k|k} A_k^{-1} \nonumber\\
&-(A_k^{-1})^T \bar{Y}^i_{k|k} 
\left( \bar{Y}^i_{k|k}  +  A_k^T Q_k^{-1}  A_k\right) 
\bar{Y}^i_{k|k} A_k^{-1}.
\end{align}
Then using the function $f(Y)$, defined as
\begin{align*}
f(Y)\triangleq & (A^{-1})^T Y A^{-1} 
-(A^{-1})^T Y 
\left( Y  +  A^T Q^{-1}  A\right) 
Y A^{-1},
\end{align*}
with property $Y_1 \leq Y_2$ then $f(Y_1) \leq f(Y_2)$ \cite{horn2012matrix}, it can be shown that the filtering step is consistent.
\end{proof}

\end{document}